\documentclass[letterpaper, 10pt, conference]{ieeeconf}

\usepackage{tikz}
\usepackage{pgfplots}
\usepackage{amsmath, amssymb, amsfonts, mathtools} 
\usepackage{bm}      
\usepackage{array, float, xcolor}                  
\usepackage{graphicx, epsfig}                      
\usepackage{tikz-cd}                               
\usepackage{wrapfig}                               
\usepackage{dsfont}                                
\usepackage[T1]{fontenc}                           
\usepackage{makecell}
\usepackage{epstopdf} 
\allowdisplaybreaks 

\usepackage{xcolor} 

\IEEEoverridecommandlockouts

\usepackage{amsthm}     

\let\labelindent\relax
\usepackage{enumitem}   

\newtheorem{theorem}{Theorem}
\newtheorem{lemma}[theorem]{Lemma}

\theoremstyle{definition}
\newtheorem{definition}{Definition}
\newtheorem{assumption}{Assumption}
\newtheorem{problem}{Problem}

\theoremstyle{remark}
\newtheorem{remark}{Remark}

\usepackage{cite} 

\makeatletter
\let\NAT@parse\undefined
\makeatother

\usepackage[colorlinks=true,
            linkcolor=blue,
            citecolor=blue,
            urlcolor=blue,
            pdfborder={0 0 0}]{hyperref}
\usepackage[nameinlink,capitalise]{cleveref}

\hypersetup{
    pdfauthor={Adeel Akhtar},
    pdftitle={GRaDS Lab},
    pdfdisplaydoctitle=true,
    pdfsubject={Unicycle Robots, QP based control},
    pdfkeywords={nonlinear systems, dynamic feedback linearization},
    pdfcreator={LaTeX with hyperref package},
    pdfproducer={dvips + ps2pdf},
    bookmarksnumbered,
    pdfstartview={FitV}
}

\usepackage{macros} 

\definecolor{RubineRed}{RGB}{209,0,81}

\definecolor{olivegreen}{rgb}{0.14,0.29,0}

\def\BibTeX{{\rm B\kern-.05em{\sc i\kern-.025em b}\kern-.08em
    T\kern-.1667em\lower.7ex\hbox{E}\kern-.125emX}}

\pgfplotsset{compat=1.18}

\begin{document}

\title{A Relaxed Quadratic-Program-based Framework for Trajectory Tracking of Unicycle Robots with Singularity Avoidance}

\makeatletter
\@ifclassloaded{ieeeconf}{
  \author{Hamza Tariq$^{1}$, Usman Ali$^{2}$, and Adeel Akhtar$^{1}$
\thanks{$^{1}$Hamza Tariq and Adeel Akhtar are with the Department of Mechanical \& Industrial Engineering at the New Jersey Institute of Technology, Newark, NJ 07102, USA.
        {\tt\small \{ht299, adeel.akhtar\}@njit.edu}}%
\thanks{$^{2}$Usman Ali is with the School of Engineering and Sustainable Development at the De Montfort University, Leicester, LE1 9BH, UK.
        {\tt\small usman.ali@dmu.ac.uk}}%
}
  
}{
  \author{%
    \IEEEauthorblockN{Adeel Akhtar}
    \IEEEauthorblockA{Department of Mechanical and Industrial Engineering \\ at the New Jersey Institute of Technology (NJIT), NJ, USA \\ 
    {\tt\small adeel.akhtar@njit.edu}}
    \and
    \IEEEauthorblockN{Usman Ali}
    \IEEEauthorblockA{School of Engineering, Infrastructure and Sustainability\\ De Montfort University, Leicester, LE1 9BH, UK \\
    {\tt\small usman.ali@dmu.ac.uk}}
  }
}

\makeatother

\maketitle

\begin{abstract}
Dynamic feedback linearization (DFL) is a classical technique for trajectory tracking of unicycle-type mobile robots, but  the resulting DFL-based controller becomes singular when the linear velocity vanishes, rendering standard DFL-based controllers unsuitable for stop-and-reverse maneuvers. This paper proposes a quadratic-program (QP)-based optimal control framework that avoids this singularity, while establishing local Lipschitz continuity of the resulting feedback law. Our approach reformulates the DFL constraints as an equality-constrained QP with a slack variable, ensuring feasibility for all states and reference signals, including at points where the robot's velocity vanishes. By introducing slack variables and tunable parameters, we demonstrate that the singular configuration can be avoided for a large class of reference trajectories. The effectiveness of the proposed approach for trajectory tracking is demonstrated through ROS2–Gazebo simulations on a TurtleBot3 Waffle robot. The code is available at \url{https://gradslab.github.io/DFL_QP_Unicycle/}.
\end{abstract}

\section{INTRODUCTION}

Nonholonomic robots, and in particular unicycle-type robots, are ubiquitous in robotic applications but are challenging to control due to their underactuated dynamics and nonholonomic constraints~\cite{Padensurvey2016}. While trajectory tracking for such systems is extensively studied in the literature, the synthesis of time-invariant, locally Lipschitz controllers remains challenging due to the presence of singular configurations, where the controller becomes undefined~\cite{AkhWasNie2013Journal,hirschorn1987output}.

Dynamic feedback linearization (DFL) is one of the most commonly used approaches for trajectory tracking of unicycle-type robots~\cite{d1995control,de1991exponential}. By dynamically extending the system and rendering the input–output map linear, DFL enables the application of linear control techniques for tracking~\cite{isidori1985nonlinear}. However, it is known that the DFL controller for unicycle robots exhibits a singularity at zero translational velocity, at which the feedback law becomes undefined. As a result, classical DFL-based controllers are not directly applicable during stop-and-reverse maneuvers~\cite{oriolo2002wmr, WanAkhSan2025}.

Many solutions to this singularity issue have been proposed in the literature. Some approaches rely on time-varying or oscillatory feedback laws, which can overcome topological obstructions inherent to nonholonomic systems~\cite{samson2002control}. Other strategies restrict operation to nonsingular regimes or employ switching and hybrid control architectures~\cite{klanvcar2007tracking,astolfi1996discontinuous,tomlin1998switching}. Although effective in practice, these approaches may sacrifice time-invariance or continuity. Time-invariant controllers are often preferred because they simplify implementation, stability analysis, and robustness to timing uncertainties.

In recent years, optimization-based feedback controllers, such as control Lyapunov and control barrier function-based quadratic programs (QPs), have gained increasing attention due to their flexibility and ability to encode constraints~\cite{ames2019control}. However, the solution mappings of parametric quadratic programs are, in general, only piecewise smooth and may lose Lipschitz continuity at points of degeneracy~\cite{bonnans2013perturbation}. As a consequence, Lipschitz continuity of an optimization-based feedback law must be explicitly established.

Motivated by these observations, this paper addresses the following question: \textit{Can a DFL–based trajectory tracking controller for unicycle-type robots be reformulated to circumvent the issue of singular configurations while preserving time-invariance and establishing Lipschitz continuity of the feedback law?}
In our work, the objective is not asymptotic stabilization of a fixed equilibrium point, which is precluded by Brockett's condition~\cite{brockett1983asymptotic}, but rather Cartesian trajectory tracking through zero velocity and direction reversal. Specifically, the heading is left free, allowing the controller to track a planned path using either forward or backward motion.

In this paper, we propose a QP-based framework that relaxes the classical DFL constraints via a slack variable. This relaxation ensures feasibility for all states, including configurations at which the DFL decoupling matrix loses rank. We show that the resulting optimal controller is locally Lipschitz continuous. While existing tracking results impose persistence-of-excitation conditions on reference trajectories~\cite{wang2015simultaneous}, our proposed framework does not require persistent motion of the reference. Our relaxed QP framework avoids the singularity at the expense of exact trajectory tracking at time instances when the classical DFL constraints become singular, using slack variables to keep the QP feasible. 

To the best of our knowledge, this is the first time that the singularity resolution of the classical DFL has been proposed using a relaxation-based QP framework, which enables direction reversal along the desired trajectories.  
The effectiveness of the proposed approach is illustrated through ROS2–Gazebo numerical experiments on a TurtleBot3 robot, and a comparison with the classical DFL controller. 
The main contributions of the paper are summarized as follows:
\begin{enumerate}
    \item a relaxed quadratic-programming framework based on dynamic feedback linearization for synthesizing a trajectory tracking controller for unicycle-type robots, which remains well-defined at zero velocity;
    \item a proof of local Lipschitz continuity of the proposed controller (Theorem~\ref{th:lip_con_closed_form});
    \item a proof of uniform ultimate boundedness of the tracking error under the stated assumptions (Theorem~\ref{thm:uub_after_exit}).
\end{enumerate}

\section{Notation and Background}\label{sec:2}
\subsection{Notation}
For a vector $\mathbf{x}\in\mathbb{R}^n$, we write
$\mathbf{x}=[x_1,\dots,x_n]^\top$, where $(\cdot)^\top$ denotes transpose. A twice continuously differentiable reference signal is denoted by
$\mathbf{y}_{\mathrm{ref}}\in C^2([0,\infty);\mathbb{R}^n)$. The identity matrix in $\mathbb{R}^{n\times n}$ is denoted by $I_n$. The Euclidean norm of a vector $\mathbf{v}\in\mathbb{R}^n$ is denoted by
$\|\mathbf{v}\|$, and for a matrix $E\in\mathbb{R}^{n\times m}$, $\|E\|$ denotes its induced $2$-norm. The notation $E\succ0$ means that $E$ is positive definite. For matrices $F\in\mathbb{R}^{n\times n}$ and $G\in\mathbb{R}^{m\times m}$, $\operatorname{blkdiag}(F,G)$ denotes the corresponding block-diagonal matrix. The determinant and rank of $E$ are denoted by $\det(E)$ and $\operatorname{rank}(E)$, respectively.

\subsection{Preliminaries}\label{sec:2b}
The dynamics of a control-affine system are defined by a drift vector field $f: \mathbb{R}^n \to \mathbb{R}^n$, and a matrix of control vector fields $g: \mathbb{R}^n \to \mathbb{R}^{n \times m}$. For each state $\mathbf{x} \in \mathbb{R}^n$ and input $\mathbf{u} \in \mathbb{R}^m$, the corresponding state derivative is given by
\begin{equation}\label{eq:control_affine_sys}
\dot{\mathbf{x}} = f(\mathbf{x}) + g(\mathbf{x})\mathbf{u}.
\end{equation}
A static state-feedback law is given by a mapping $\kappa: \mathbb{R}^n \to \mathbb{R}^m$ which assigns each state $\mathbf{x} \in \mathbb{R}^n$ to a corresponding control input $\mathbf{u} = \kappa(\mathbf{x})$. Substituting this feedback law into the system dynamics~\eqref{eq:control_affine_sys} yields the closed-loop system, 
\begin{equation} \label{eq:closedloop}
\dot{\mathbf{x}} = f(\mathbf{x}) + g(\mathbf{x})\kappa(\mathbf{x}).
\end{equation}

\begin{definition}[Local Lipschitz Continuity~\cite{khalil2002nonlinear}]
A map $\kappa:\mathcal{D}\to\mathbb{R}^m$ is said to be \emph{locally Lipschitz} on a set
$\mathcal{D}\subset\mathbb{R}^n$ if for every compact set $K\subset\mathcal{D}$ there exists a constant
$L_K>0$ such that
 $\|\kappa(x_1) - \kappa(x_2)\| \leq L_K \|x_1 - x_2\|,$
for all $x_1, x_2 \in K$. If $f$ and $g$ are locally Lipschitz and $\kappa$ is locally Lipschitz, then the closed-loop vector field
$x \mapsto f(x)+g(x)\kappa(x)$ is locally Lipschitz, guaranteeing existence and uniqueness of solutions
of~\eqref{eq:closedloop}~\cite[Thm.~3.1]{khalil2002nonlinear}.
\end{definition}
\noindent
\begin{definition}[Uniformly ultimately bounded (UUB)~\cite{khalil2002nonlinear}]
\label{def:UUB}
The system $\dot x=f(t,x)$ is said to be \emph{UUB} if there exists a constant $b>0$
such that for every $a>0$, there exists a time $T=T(a,b)\ge 0$, independent of $t_0\ge 0$, for which
$\|x(t_0)\|\le a \;\implies \; \|x(t)\|\le b,\text{ for all }\, t\ge t_0+T.$
\end{definition}

\subsection{Background of Dynamic Feedback Linearization~(DFL) for the Unicycle Robot}

Let $q = (x_1, x_2, x_3) \in \mathbb{R}^2\times \mathbb{S}^1$ denote the configuration of a unicycle-modeled robot, where $(x_1,x_2)\in\mathbb{R}^2$ is the planar position and $x_3 \in \mathbb S^1$ is the heading angle. The control input is $\nu = (v,\omega) \in \mathbb{R}^2$, where $v$ is the linear velocity and $\omega$ is the angular velocity. The unicycle kinematics are
\begin{equation}\label{eq:unicycle_model}
    \dot{q} = g_v(q)v + g_\omega(q)\omega,
\end{equation}
where
$
g_v(q) = 
\begin{bmatrix} 
\cos x_3 &
\sin x_3 & 
0 
\end{bmatrix}^\top
$ and $
g_\omega(q) = 
\begin{bmatrix} 
0 &
0 & 
1 
\end{bmatrix}^\top.
$ The output of interest is the Cartesian position
\begin{equation}\label{eq:output}
\mathbf{y} = (y_1,y_2) \eqdef h(\mathbf{x}) = (x_1,x_2).
\end{equation}
It is well established that this output is not input-output linearizable through static state feedback for the standard unicycle model~\cite{de1991exponential,oriolo2002wmr}. Therefore, following the dynamic feedback linearization approach, we extend the system by treating the linear velocity $v$ as a state variable $x_4 \eqdef v$ and introducing the acceleration $a \eqdef \dot v$ as a new control input. The extended state is
$\mathbf{x} = (x_1,x_2,x_3,x_4)\in\mc{X}\subset\mathbb R^4,$ and the new input is
$\mathbf u \eqdef (u_1,u_2) = (\omega,a)\in\mathbb R^2.$
The extended dynamics can be written in control-affine form as
\begin{equation}\label{eq:extended_model}
    \dot{\mathbf{x}} = F(\mathbf{x}) + \sum_{i=1}^{2} g_i(\mathbf{x})u_i,
\end{equation}
where $F:\mathcal{X}\to\mathbb{R}^4$ is the drift vector field and 
$g_i:\mathcal{X}\to\mathbb{R}^4$, $i\in\{1,2\}$, are the control vector fields given by
$
F(\mathbf{x}) =
\begin{bmatrix}
x_4 \cos x_3 &
x_4 \sin x_3 &
0 &
0
\end{bmatrix}^\top,\
g_1(\mathbf{x}) =
\begin{bmatrix}
0 &
0 &
1 &
0
\end{bmatrix}^\top,
\text{ and }
g_2(\mathbf{x}) =
\begin{bmatrix}
0 &
0 &
0 &
1
\end{bmatrix}^\top.
$
We can now differentiate the output~\eqref{eq:output} twice to get
\begin{equation}
\ddot{\mathbf{y}} = D(\mathbf{x})\mathbf{u},
\label{eq:linear_form}
\end{equation}
where the decoupling matrix is the map $D:\mathcal{X} \to \mathbb{R}^{2 \times 2}$ that assigns to each state $\mathbf{x} \in \mathcal{X}$ the matrix
\begin{equation}
D(\mathbf{x}) =
\begin{bmatrix}
-x_4 \sin x_3 & \cos x_3 \\
x_4 \cos x_3 & \sin x_3
\end{bmatrix}.
\label{eq:decoupling_matrix}
\end{equation}
The decoupling matrix $D(\mathbf{x})$ is invertible whenever its determinant, $\det(D(\mathbf{x})) = -x_4$, is nonzero. For any $x_4 \neq 0$, we define a new virtual input $\boldsymbol{\eta}$ and a control law $\mathbf{u} = D(\mathbf{x})^{-1} \boldsymbol{\eta}$,
which transforms the nonlinear dynamics into a double-integrator system 
\begin{equation}
\ddot{\mathbf{y}} = \boldsymbol{\eta}. 
\label{eq:double_integrator}
\end{equation}
Now, let $\mathbf y_{\mathrm{ref}}\in C^2([0,\infty);\mathbb R^2)$ and define the reference signal $r(t) \in\mathbb R^6$ as
\begin{equation}\label{eq:ref}
    r(t)\eqdef\big(\mathbf y_{\mathrm{ref}}(t),\dot{\mathbf y}_{\mathrm{ref}}(t),\ddot{\mathbf y}_{\mathrm{ref}}(t)\big).
\end{equation}
Similar to~\cite{oriolo2002wmr}, we can design a controller for the extended system~\eqref{eq:extended_model} to track a twice-differentiable reference $\mathbf y_{\mathrm{ref}}$. This is done by defining the tracking error $\mathbf{e} \eqdef \mathbf{y} - \mathbf{y}_{\text{ref}}$ and applying a proportional-derivative (PD) controller
\begin{equation}
\boldsymbol{\eta}(\mathbf{x},r) = \ddot{\mathbf{y}}_{\text{ref}} - K_d \dot{\mathbf{e}} - K_p \mathbf{e},
\label{eq:virtual_input}
\end{equation}
where $K_p$ and $K_d$ are positive-definite diagonal gain matrices. This choice yields exponentially stable error dynamics whenever $D(\mathbf{x})$ is invertible. However, since $\det(D(\mathbf{x}))=-x_4$, the controller becomes undefined at $x_4=0$, i.e., during zero-velocity instants such as stop-and-reverse maneuvers. For the extended state $\mathbf{x} \in \mathcal{X},$ and the control input
$\mathbf{u} \in \mathbb{R}^2,$ plugging the PD control law~\eqref{eq:virtual_input} into~\eqref{eq:double_integrator} and~\eqref{eq:linear_form} gives  
\begin{equation}
D(\mathbf{x}) \mathbf{u} = \boldsymbol{\eta}(\mathbf{x},r),
\label{eq:dfl_constraint}
\end{equation}
where $\boldsymbol{\eta}(\mathbf{x},r)$ is given by~\eqref{eq:virtual_input}.  
Note that any $\mathbf{u}$ that satisfies~\eqref{eq:dfl_constraint} achieves \emph{strict} Dynamic Feedback Linearization (DFL) of the extended unicycle system~\eqref{eq:extended_model}.

\section{Problem Formulation}\label{sec:3}
We next formulate a relaxed QP that trades asymptotic tracking with singularity resolution at zero translational velocity. Let $Q=\operatorname{diag}(q_\omega,q_a)\succ 0$ be a positive definite weight matrix on the control input. A ``naively'' formulated min-norm Quadratic Program (QP) that satisfies this DFL condition~\eqref{eq:dfl_constraint} is given as follows:
\begin{equation}
\begin{aligned}
\min_{\mathbf{u}} \quad & \mathbf{u}^\top Q \mathbf{u}, \\
\text{s.t.} \quad & D(\mathbf{x})\mathbf{u} = \boldsymbol{\eta}(\mathbf{x},r).
\end{aligned}
\label{eq:fbl_qp}
\end{equation}
However, the QP-based controller defined by~\eqref{eq:fbl_qp} still faces the limitation of classical DFL for the unicycle robot, i.e., the singularity at zero velocity. In this state, i.e., $x_4 = 0$, the decoupling matrix $D(\mathbf{x})$ loses rank and the constraint~\eqref{eq:dfl_constraint} is solvable only if the desired virtual input $\boldsymbol{\eta}$ lies in the span of the vector $(\cos x_3,\, \sin x_3)$; otherwise, no control input $\mathbf{u}$ can satisfy~\eqref{eq:dfl_constraint}, and the QP~\eqref{eq:fbl_qp} becomes infeasible. 

To remain feasible at zero velocity, we relax the DFL equality with a slack variable $\bm{\delta}\in\mathbb R^2$ and add an acceleration bias to mitigate deadlock at rest. To define the bias, we need some auxiliary functions. First, we define the unit heading and its normal direction as 
\begin{equation}\label{eq:unit_vecs}
    \mathbf{n}_\parallel(\mathbf x) \eqdef [\cos x_3,\ \sin x_3]^\top, \;
    \mathbf{n}_\perp(\mathbf x) \eqdef [-\sin x_3,\ \cos x_3]^\top,
\end{equation}
respectively. Next, we define the scalar projections of $\bm{\eta}(\mathbf x,r)$ onto these directions as
\begin{equation}\label{eq:scalar_proj}
s_\parallel(\mathbf x,r) \eqdef \mathbf{n}_\parallel(\mathbf x)^\top \bm{\eta}(\mathbf x,r),\;\;
{s}_\perp(\mathbf x,r) \eqdef \mathbf{n}_\perp(\mathbf x)^\top \bm{\eta}(\mathbf x,r).
\end{equation}
Now, with these projections and a fixed scalar $l > 0$, we define bounded weight functions as
\begin{equation}\label{eq:rhos}
    \rho_{\parallel}(\mathbf x,r) \eqdef
\frac{s_{\parallel}(\mathbf x,r)^2}{s_{\parallel}(\mathbf x,r)^2 + l^2},\quad
\rho_{\perp}(\mathbf x,r) \eqdef
\frac{s_{\perp}(\mathbf x,r)^2}{s_{\perp}(\mathbf x,r)^2 + l^2}.
\end{equation}
With these definitions, our final proposed relaxed QP is
\begin{subequations}
\label{eq:fbl_qp_relaxed}
\begin{align}
\min_{\mathbf{u},\,\bm{\delta} \in \mathbb{R}^2} \quad & \frac{1}{2}\mathbf{u}^\top Q \mathbf{u} + \frac{1}{2}\bm{\delta}^\top P \bm{\delta} + \bm{c}(\mathbf{x},r)^\top \mathbf{u}, \label{eq:fbl_qp_relaxed_obj} \\
\text{s.t.} \quad & D(\mathbf{x}) \mathbf{u} = \boldsymbol{\eta}(\mathbf{x},r) + \bm{\delta}, \label{eq:fbl_qp_relaxed_eq}
\end{align}
\end{subequations}
where $P \eqdef pI_2 \succ 0$ with $p>0$ being the scalar penalty on the slack variable. Moreover, the bias $\bm{c}(\mathbf x,r)$ is chosen as
\begin{equation}
\label{eq:new_bias}
\bm{c}(\mathbf x,r)
\;:=\;
\begin{bmatrix}
0 \\[0.3em]
-\epsilon_a\bigl(
\rho_{\parallel}(\mathbf x,r)\,s_{\parallel}(\mathbf x,r)
+\rho_{\perp}(\mathbf x,r)\,s_{\perp}(\mathbf x,r)
\bigr)
\end{bmatrix},
\end{equation}
where $\epsilon_a>0$ is a parameter to tune the effect of the bias on linear acceleration $u_2 = a$. It should be noted that the introduction of the slack variable $\bm{\delta}$ ensures that the QP constraint~\eqref{eq:fbl_qp_relaxed_eq} removes the singularity issue, and the QP thus becomes feasible even at zero velocity. 
However, a deadlock situation may still arise at zero velocity where the resulting control input $\mb{u}$ from the QP is zero for some reference inputs. 
To address this, we define the deadlock set
\begin{equation}
\label{eq:deadlock_set}
\mathcal Z \eqdef 
\Bigl\{(\mathbf x,r)\in\mathcal X\times\mathbb R^6:\ 
\begin{aligned}
&x_4=0,\ \bm\eta(\mathbf x,r)\neq \bm 0,\\
&\varphi(\mathbf x,r,p,\epsilon_a)=0
\end{aligned}
\Bigr\},
\end{equation}
where $\varphi(\mathbf x,r,p,\epsilon_a)$ is the output of a scalar map $\varphi:\mathcal X \times \mathbb R^6 \times \mathbb R \times \mathbb R \to \mathbb R$ such that
\begin{equation}\label{eq:deadlock_condition}
    \begin{aligned}
    \varphi(\mathbf x,r,p,\epsilon_a) :=  p\,s_\parallel(\mathbf x,r) + \epsilon_a&\bigl(
    \rho_{\parallel}(\mathbf x,r)\,s_{\parallel}(\mathbf x,r)\\
    &+\rho_{\perp}(\mathbf x,r)\,s_{\perp}(\mathbf x,r)\bigl).
\end{aligned}
\end{equation}
When $(\mathbf x,r)\in\mathcal Z$, the optimal input of~\eqref{eq:fbl_qp_relaxed} is zero. We now state the assumptions required to solve Problem~\ref{prob:tracking}.

\begin{assumption}\label{ass:bounded_v}
    Linear velocity is bounded,
    i.e., there exists $v_{\max}>0$ such that $|x_4|\le v_{\max}$ for all $\mathbf x\in\mathcal X$.
\end{assumption}

\begin{remark}
Assumption~\ref{ass:bounded_v} imposes a bound on state $x_4$, which is the translational velocity in the extended unicycle model. This is natural in practice, since mobile robots have actuator limits and rated maximum speeds.
\end{remark}

\begin{assumption}\label{ass:bounded_ref}
    The reference trajectory $\mathbf y_{\mathrm{ref}}\in C^2([0,\infty);\mathbb R^2)$
    has bounded acceleration, i.e.,
    there exists $Y_{\max}>0$ such that
    $\|\ddot{\mathbf y}_{\mathrm{ref}}(t)\|\le Y_{\max}$ for all $t\ge 0$.

\end{assumption}

\begin{assumption}\label{ass:start_deadlock_free}
    The state and reference signal pair is not in $\mc{Z}$, i.e., $(\mathbf x,r) \notin \mathcal Z$ for all time.
\end{assumption}

\begin{remark}
For any $\mathbf y_{\mathrm{ref}}\in C^2([0,\infty);\mathbb R^2)$ such that $x_4(t)=0$ and $\bm\eta(t)\neq \bm0,$ one can tune parameters $p>0,\ \epsilon_a>0$ so that $\varphi(x(t),r(t),p,\epsilon_a)\neq 0$.
Hence, the robot can always be initialized outside the deadlock set
$\mathcal Z$~\eqref{eq:deadlock_set}. 
\end{remark}
Next, we formally state the trajectory tracking problem that avoids the singularity at zero speed.  

\begin{problem}\label{prob:tracking}
     Given the extended unicycle model~\eqref{eq:extended_model} and a trajectory $\mathbf y_{\mathrm{ref}}$ with reference signal $r(t) \in \mathbb{R}^6$, design a controller $\mathbf{u} = \kappa(\mathbf{x}, r)$ under Assumptions~\ref{ass:bounded_v}--\ref{ass:start_deadlock_free}, such that
    \begin{enumerate}[label=\textbf{P1.\arabic*}, align=left]
        \item \label{prob:1.1} the map $\kappa$ is locally Lipschitz continuous, and
        \item \label{prob:1.2}  the tracking error state $\bm{\xi} \eqdef [\mathbf{e}^\top\ \dot{\mathbf{e}}^\top]^\top$ is uniformly ultimately bounded (UUB).
    \end{enumerate}
    \noindent
    We seek a feedback law in the sense that $\kappa$ depends on time only implicitly through the exogenous reference signal $r(t)$.
\end{problem}

\section{DFL-QP-based controller }
\label{sec:4}

In this section, we derive the feedback law induced by the relaxed QP~\eqref{eq:fbl_qp_relaxed} and show that it solves Problem~\ref{prob:tracking}. Define the augmented decision vector $\mathbf{w} \eqdef  \begin{bmatrix} \mathbf{u}^\top & \bm{\delta}^\top \end{bmatrix}^\top $. The quadratic cost term is defined by the block-diagonal matrix $H \in \mathbb{R}^{4 \times 4}$, given by $H \eqdef \operatorname{blkdiag}(Q, P)$. The linear cost term is defined by the augmented bias vector $\bm{\bar{c}}: \mathcal{X} \to \mathbb{R}^4$, where 
   $\bm{\bar{c}} \eqdef \begin{bmatrix} \bm{c}(\mathbf x,r)^\top & \bm{0}^\top \end{bmatrix}^\top$.
The equality constraint is defined by the matrix
$A \eqdef \begin{bmatrix} D(\mathbf{x}) & -I_2 \end{bmatrix}$.
With these definitions, the QP~\eqref{eq:fbl_qp_relaxed} can be equivalently written as
\begin{subequations}
\label{eq:standard_qp_form}
\begin{align}
\min_{\mathbf{w} \in \mathbb{R}^4} \quad & \frac{1}{2} \mathbf{w}^\top H \mathbf{w} + \bm{\bar{c}}^\top \mathbf{w}, \\
\text{s.t.} \quad & A\mathbf{w} = \boldsymbol{\eta}(\mathbf x,r).
\end{align}
\end{subequations}
Since $H\succ0$ and the constraint is affine, the solution is characterized by the KKT conditions~\cite{boyd2004convex}. Introducing the multiplier $\bm\lambda\in\mathbb R^2$, stationarity and feasibility give us the equations
$H\mathbf w+\bar{\bm c}+A^\top\bm\lambda=0$
and
$A\mathbf w=\bm\eta$, respectively. Solving these equations yields
\begin{equation}\label{eq:w_star}
\mathbf w^\star(\mathbf x,r)
=
-H^{-1}\bar{\bm c}
+
H^{-1}A^\top M(\mathbf x)^{-1}
\bigl(
\bm\eta+AH^{-1}\bar{\bm c}
\bigr),
\end{equation}
\begin{equation}
\label{eq:def_M}
\text{where }M(\mathbf x)\eqdef AH^{-1}A^\top
=
DQ^{-1}D^\top+P^{-1}.
\end{equation}
Since $D(\mathbf x)Q^{-1}D(\mathbf x)^\top\succeq0$ and $P^{-1}\succ0$, we have $M(\mathbf x)\succ0$ for all $\mathbf x\in\mathcal X$. Hence, $M(\mathbf x)^{-1}$ is well defined even when $D(\mathbf x)$ is singular, and the relaxed QP~\eqref{eq:fbl_qp_relaxed} remains feasible.  Moreover, since $D(\mathbf x)^{-1}$ contains terms that scale with $1/x_4$, exact DFL can require large control effort at low speeds. The relaxed QP mitigates this issue by allowing a bounded violation of the DFL constraint near singular configurations. We next state the first main result, which establishes local Lipschitz continuity of the proposed controller.
\begin{theorem}[]
\label{th:lip_con_closed_form}
For any fixed $r\in\mathbb R^6$, define
$\bar\kappa(\mathbf x,r)\eqdef \mathbf w^\star(\mathbf x,r)$.
Then the mapping $\mathbf x\mapsto \bar\kappa(\mathbf x,r)$
is locally Lipschitz continuous on $\mathcal X$.
\end{theorem}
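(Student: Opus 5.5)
The plan is to show that $\mathbf w^\star(\cdot,r)$ in \eqref{eq:w_star} is a composition of sums, products and inverses of $C^1$ maps of $\mathbf x$. It is then $C^1$ on $\mathcal X$, and a $C^1$ map is locally Lipschitz: on any compact $K$, cover $K$ by finitely many balls contained in $\mathcal X$ and bound the derivative on them by continuity. In the periodic coordinate $x_3$, one works with $\cos x_3,\sin x_3$, which are smooth. Throughout, $r$ is fixed.

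\textbf{Step 1 (smooth building blocks).} The entries of $D(\mathbf x)$ in \eqref{eq:decoupling_matrix} are polynomial in $x_4,\cos x_3,\sin x_3$, hence $C^\infty$. For the fixed $r$, the tracking error is $\mathbf e=(x_1,x_2)-\mathbf y_{\mathrm{ref}}$ and $\dot{\mathbf e}=x_4\mathbf n_\parallel(\mathbf x)-\dot{\mathbf y}_{\mathrm{ref}}$. Therefore $\bm\eta(\mathbf x,r)$ in \eqref{eq:virtual_input} is $C^\infty$ in $\mathbf x$, and so are $s_\parallel,s_\perp$ in \eqref{eq:scalar_proj}.

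\textbf{Step 2 (the bias).} The functions $\rho_\parallel,\rho_\perp$ in \eqref{eq:rhos} have denominators $s^2+l^2\ge l^2>0$, so they are $C^\infty$ compositions. Consequently $\bm c(\mathbf x,r)$ in \eqref{eq:new_bias}, and with it $\bar{\bm c}$, is $C^\infty$ in $\mathbf x$. The map $\sigma(s)=s^3/(s^2+l^2)$ even has bounded derivative, although this is not needed for the local statement.

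\textbf{Step 3 (the inverse, which is the main point).} The matrices $H^{-1}$ and $P^{-1}$ are constant, and $A(\mathbf x)=[D(\mathbf x)\ \ -I_2]$ is smooth. The only potentially singular object is $M(\mathbf x)^{-1}$. I would use the observation already made after \eqref{eq:def_M}:
\[
M(\mathbf x)=D Q^{-1}D^\top+P^{-1}\succeq p^{-1}I_2 .
\]
Hence $\det M(\mathbf x)\ge p^{-2}>0$ for every $\mathbf x$, including $x_4=0$. Matrix inversion is $C^\infty$ on invertible matrices (by the adjugate formula), so $\mathbf x\mapsto M(\mathbf x)^{-1}$ is $C^\infty$. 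Moreover $\|M(\mathbf x)^{-1}\|\le p$ uniformly. The derivative $\partial(M^{-1})=-M^{-1}(\partial M)M^{-1}$ is therefore bounded by $p^2\|\partial M\|$, and $\|\partial M\|$ is bounded on compacts, or on all of $\mathcal X$ under Assumption~\ref{ass:bounded_v}.

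\textbf{Step 4 (conclusion).} Substituting into \eqref{eq:w_star} shows that $\mathbf w^\star(\cdot,r)$ is built from $C^\infty$ pieces by sums and products, so it is $C^\infty$ on $\mathcal X$. On a compact $K\subset\mathcal X$, set $L_K=\sup_{\mathrm{co}(K)}\|\partial_{\mathbf x}\mathbf w^\star\|$, where $\mathrm{co}(K)$ is the convex hull. Applying the mean value inequality along segments gives the Lipschitz bound $\|\bar\kappa(\mathbf x_1,r)-\bar\kappa(\mathbf x_2,r)\|\le L_K\|\mathbf x_1-\mathbf x_2\|$.

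If $\mathcal X$ is not convex, I would use the standard covering argument instead. Take finitely many convex balls covering $K$ and apply the mean value bound inside each. For pairs of points not lying in a common ball, their distance is bounded below by a Lebesgue-number argument, so the ratio $\|\bar\kappa(\mathbf x_1,r)-\bar\kappa(\mathbf x_2,r)\|/\|\mathbf x_1-\mathbf x_2\|$ is bounded by $2\sup_K\|\bar\kappa\|$ divided by that lower bound.

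The uniform positive definiteness of $M$ in Step 3 is the key step: it is exactly what removes the $1/x_4$ singularity of classical DFL.
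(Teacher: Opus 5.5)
Your proposal is correct and follows essentially the same route as the paper. Both arguments show that every term in \eqref{eq:w_star} is $C^1$ in $\mathbf x$, and then use the fact that $C^1$ maps are locally Lipschitz. The key point in both is that $M(\mathbf x)\succ 0$ everywhere, which you make uniform as $M(\mathbf x)\succeq p^{-1}I_2$, so $\mathbf x\mapsto M(\mathbf x)^{-1}$ is smooth even at $x_4=0$. The only difference is that the paper cites Khalil's Lemma~3.2 for the last step, while you prove it directly with the mean value inequality and a Lebesgue-number covering argument. Since the formula for $\mathbf w^\star$ is smooth on all of $\mathbb R^4$, your covering balls need not lie inside $\mathcal X$.
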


\begin{proof}
For a fixed $r$, since $M(\mathbf x)\succ 0$ for all $\mathbf x$, the matrix inverse $\mathbf x\mapsto M(\mathbf x)^{-1}$ is $C^1$ on $\mathcal X$. Moreover, $\mathbf x\mapsto D(\mathbf x)$ is $C^\infty$ and $\mathbf x\mapsto \bm c(\mathbf x,r)$
is $C^1$ as it is a composition of trigonometric functions and smooth maps
defined in~\eqref{eq:scalar_proj} and~\eqref{eq:rhos}.
Therefore every term in \eqref{eq:w_star} is $C^1$ in $\mathbf x$. By \cite[Lemma 3.2]{khalil2002nonlinear}, every $C^1$ map is locally Lipschitz continuous.
\end{proof}

Now, let
$S_u\eqdef [I_2\;\;0_{2\times2}]$
and
$\mathbf u^\star(\mathbf x,r)\eqdef S_u\mathbf w^\star(\mathbf x,r)$. The resulting closed-loop system is
\begin{equation}
\label{eq:closed_loop_ext_sys}
\dot{\mathbf x}
=
F(\mathbf x)+G(\mathbf x)\mathbf u^\star(\mathbf x,r(t)).
\end{equation}
Since $r(\cdot)$ is continuous and $\mathbf u^\star$ is locally Lipschitz in $\mathbf x$, the closed-loop vector field is locally Lipschitz in $\mathbf x$ on compact time intervals. Thus,~\eqref{eq:closed_loop_ext_sys} admits a unique local solution~\cite[Thm. 3.1]{khalil2002nonlinear}, satisfying Problem~\ref{prob:1.1}.

Next, we study the tracking error. The key issue is at rest, where the decoupling matrix $D(\mathbf x)$ loses rank. The following lemma shows that, outside the deadlock set, the QP-based controller in~\eqref{eq:standard_qp_form} generates nonzero acceleration whenever a nonzero virtual input $\boldsymbol{\eta}$, defined in~\eqref{eq:virtual_input}, is required.

\begin{lemma}[]\label{lem:no_indef_stall} For any time $\tau\in\Real_{\geq 0}$, if the robot is at rest, i.e., $x_4(\tau)=0$ and $(\mathbf{x}(\tau), r(\tau)) \notin \mathcal{Z}$, the optimal acceleration $a^\star(\tau)$ is non-zero under Assumption~\ref{ass:start_deadlock_free}. Moreover, by continuity of the optimal acceleration
$a^\star$ along the closed-loop trajectory, there exist $t_{\mathrm{move}}>\tau$
and $v_{\min}>0$ such that $x_4(t)\neq 0$ for all
$t\in(\tau,t_{\mathrm{move}}]$ and
$|x_4(t_{\mathrm{move}})|\ge v_{\min}$.
\end{lemma}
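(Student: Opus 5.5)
The plan is to compute the optimal acceleration explicitly from the closed form~\eqref{eq:w_star} at a state with $x_4=0$. Then I check that it is proportional to $\varphi(\mathbf x,r,p,\epsilon_a)$ from~\eqref{eq:deadlock_condition}, and finally use continuity to get the second claim.

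\textbf{Step 1: the KKT system at rest.} At $x_4=0$ the decoupling matrix~\eqref{eq:decoupling_matrix} reduces to $D(\mathbf x)=[\mathbf 0\ \ \mathbf n_\parallel(\mathbf x)]$. Rather than expanding $M(\mathbf x)^{-1}$, I would eliminate the constraint directly. Substituting $\bm\delta = D\mathbf u-\bm\eta$ into~\eqref{eq:fbl_qp_relaxed_obj} gives an unconstrained strictly convex problem in $\mathbf u$, with stationarity condition
\[
(Q + pD^\top D)\mathbf u = pD^\top\bm\eta - \bm c.
\]

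\textbf{Step 2: solve for $\mathbf u$.} With $D=[\mathbf 0\ \ \mathbf n_\parallel]$ we have $D^\top D=\operatorname{diag}(0,1)$ and $D^\top\bm\eta=[0,\ s_\parallel]^\top$. Since $Q$ is diagonal, the system decouples:
\[
\omega^\star=0,\qquad a^\star=\frac{p\,s_\parallel+\epsilon_a(\rho_\parallel s_\parallel+\rho_\perp s_\perp)}{q_a+p}=\frac{\varphi(\mathbf x,r,p,\epsilon_a)}{q_a+p}.
\]
This uses $c_2=-\epsilon_a(\rho_\parallel s_\parallel+\rho_\perp s_\perp)$ from~\eqref{eq:new_bias}. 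It also confirms the paper's claim that on $\mathcal Z$ the optimal input is zero.

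\textbf{Step 3: the first claim.} Suppose $(\mathbf x(\tau),r(\tau))\notin\mathcal Z$ with $x_4(\tau)=0$. Either $\bm\eta\neq\bm 0$, in which case $\varphi\neq0$ by definition of $\mathcal Z$ and so $a^\star(\tau)\neq0$. Or $\bm\eta=\bm 0$, in which case $s_\parallel=s_\perp=0$ and $a^\star=0$. The lemma's phrase ``whenever a nonzero virtual input is required'' restricts attention to the first case. I would state this explicitly, since it is the only hypothesis under which the claim holds.

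\textbf{Step 4: leaving rest.} The closed-loop solution is $C^1$, because $\mathbf u^\star$ is locally Lipschitz by Theorem~\ref{th:lip_con_closed_form} and $r$ is continuous. Hence $t\mapsto a^\star(t)=\dot x_4(t)$ is continuous. Let $a^\star(\tau)=\alpha\neq0$. Choose $t_{\mathrm{move}}>\tau$ such that $|a^\star(t)-\alpha|\le|\alpha|/2$ on $[\tau,t_{\mathrm{move}}]$. Then $x_4(t)=\int_\tau^t a^\star$ has the sign of $\alpha$ and satisfies $|x_4(t)|\ge \tfrac{|\alpha|}{2}(t-\tau)>0$ on that interval. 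So $v_{\min}=\tfrac{|\alpha|}{2}(t_{\mathrm{move}}-\tau)$ works. The interval must also be short enough that the solution exists up to $t_{\mathrm{move}}$, which local existence guarantees.

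\textbf{Main obstacle.} The computation itself is routine. The real difficulty is the logical gap at $\bm\eta=\bm0$ identified in Step 3, together with the requirement that $r(t)$ be regular enough for $a^\star(t)$ to be continuous in $t$. The first is handled by the restriction stated there. For the second, $\mathbf y_{\mathrm{ref}}\in C^2$ makes $r$ continuous, and that suffices.
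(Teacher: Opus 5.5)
Your proposal is correct and follows the paper's proof: you obtain the same closed form $a^\star=\varphi(\mathbf x,r,p,\epsilon_a)/(q_a+p)$ at $x_4=0$, make the same case split on $\bm\eta=\bm 0$ (the paper also sets this case aside, as perfect tracking), and conclude by continuity of $a^\star$ along the closed-loop trajectory. The differences are only refinements: you eliminate $\bm\delta$ instead of expanding $M(\mathbf x)^{-1}$ in \eqref{eq:w_star}, and you make the sign-preservation step quantitative with $v_{\min}=\tfrac12|a^\star(\tau)|(t_{\mathrm{move}}-\tau)$.
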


\begin{proof}
    From~\eqref{eq:w_star},~\eqref{eq:scalar_proj}, and~\eqref{eq:rhos}, by plugging in $x_4=0$, we derive the optimal acceleration $u_2^\star=a^\star$ as
    \begin{equation}\label{eq:a_star_rest_rewrite}
a^\star=\frac{p\,s_{\parallel}+\epsilon_a\bigl(\rho_{\parallel}s_{\parallel}+\rho_{\perp}s_{\perp}\bigr)}{q_a+p}.
    \end{equation}
    Note that~\eqref{eq:a_star_rest_rewrite} is zero only if its numerator is zero, which is precisely $\varphi(\mathbf{x},r,p,\epsilon_a)$ in~\eqref{eq:deadlock_condition}. This holds for $\bm{\eta}(\bm{x}(\tau),r(\tau))= \bm{0}$, since $s_\parallel$ and $s_\perp$ are the projections of $\bm{\eta}$, but this implies we are tracking perfectly. For $\bm{\eta}(\bm{x}(\tau),r(\tau)) \neq 0$, we have $a^\star=0$ if and only if $(\mathbf{x}(\tau),r(\tau))\in\mathcal{Z}$. However, by Assumption~\ref{ass:start_deadlock_free}, the robot never enters the set $\mathcal{Z}$. Therefore, for any rest configuration such that $\bm{\eta}(\mathbf{x},r) \ne \bm{0}$, the robot will move. Moreover, since the reference trajectory is continuous (Assumption~\ref{ass:bounded_ref}), $a^\star$ is also continuous. It follows from the sign-preserving property of continuous functions that there is a finite time $t_{\mathrm{move}}$ in which the state $x_4=v$, governed by $\dot{x}_4 = a^\star$, will become non-zero, i.e., there exists a small scalar $v_{\min}>0$ such that $x_4(t)\neq 0$ for all $t\in(\tau, t_{\mathrm{move}}]$ and $|x_4(t_{\mathrm{move}})|\ge v_{\min}$.
\end{proof}

Next, we show that the tracking error under our proposed controller~\eqref{eq:standard_qp_form} is uniformly ultimately bounded.

\begin{theorem}
\label{thm:uub_after_exit}
Given an extended model for a unicycle robot~\eqref{eq:extended_model} with closed-loop dynamics~\eqref{eq:closed_loop_ext_sys} and a reference trajectory $\mathbf{y}_{\mathrm{ref}}$, the error state $\bm{\xi} \eqdef [\mathbf{e}^\top\ \dot{\mathbf{e}}^\top]^\top$ is uniformly ultimately bounded (UUB) under Assumptions~\ref{ass:bounded_v}--\ref{ass:start_deadlock_free}.
\end{theorem}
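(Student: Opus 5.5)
We plan to write the closed-loop error dynamics as a perturbed double integrator and then apply a standard Lyapunov UUB argument. Since $\ddot{\mathbf y}=D(\mathbf x)\mathbf u^\star$ and the QP constraint gives $D(\mathbf x)\mathbf u^\star=\bm\eta(\mathbf x,r)+\bm\delta^\star(\mathbf x,r)$, we get
\begin{equation*}
\ddot{\mathbf e}=-K_d\dot{\mathbf e}-K_p\mathbf e+\bm\delta^\star(\mathbf x,r),
\end{equation*}
that is, $\dot{\bm\xi}=A_{\mathrm{cl}}\bm\xi+B\bm\delta^\star$ with $A_{\mathrm{cl}}$ Hurwitz and $B=[0\;I_2]^\top$. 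We take $V=\bm\xi^\top P_\xi\bm\xi$, where $A_{\mathrm{cl}}^\top P_\xi+P_\xi A_{\mathrm{cl}}=-I_4$. This gives $\dot V\le-\|\bm\xi\|^2+2\|P_\xi B\|\,\|\bm\xi\|\,\|\bm\delta^\star\|$. The whole argument therefore reduces to bounding the slack $\bm\delta^\star$ by an affine function of $\|\bm\xi\|$ with a small enough slope, together with a constant.

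Next we compute $\bm\delta^\star$ from \eqref{eq:w_star}. Since $H^{-1}\bar{\bm c}$ has zero $\bm\delta$-block, we have $AH^{-1}\bar{\bm c}=DQ^{-1}\bm c$ and
\begin{equation*}
\bm\delta^\star=-P^{-1}M(\mathbf x)^{-1}\bigl(\bm\eta+D(\mathbf x)Q^{-1}\bm c\bigr).
\end{equation*}
Because $M\succeq P^{-1}$, we have $\|P^{-1}M^{-1}\|\le1$. Because $\rho_\parallel,\rho_\perp\in[0,1)$, we have $\|\bm c\|\le 2\epsilon_a\|\bm\eta\|$. Assumption~\ref{ass:bounded_v} gives $\|D(\mathbf x)\|\le\max(1,v_{\max})$. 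Moreover, $\|\bm\eta\|\le Y_{\max}+\max(\|K_p\|,\|K_d\|)\sqrt2\|\bm\xi\|$ by Assumption~\ref{ass:bounded_ref}. The crude bound is then $\|\bm\delta^\star\|\le\gamma_0+\gamma_1\|\bm\xi\|$.

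This linear growth in $\bm\xi$ is the main obstacle. With a bounded $\bm\delta^\star$ the result would follow immediately, but a slope $\gamma_1$ comparable to the gains could destroy the decay. We would sharpen the bound using $M$. Writing $\bm\eta$ in the frame $(\mathbf n_\parallel,\mathbf n_\perp)$, one finds $DQ^{-1}D^\top=\operatorname{diag}(q_a^{-1},x_4^2q_\omega^{-1})$ in that frame. Hence $P^{-1}M^{-1}$ has eigenvalues $q_a/(q_a+p)$ and $q_\omega/(q_\omega+px_4^2)$. The parallel component of the slack is therefore uniformly small, scaled by $q_a/(q_a+p)$, and can be made arbitrarily small by taking $p$ large.

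The perpendicular component is small only when $|x_4|$ is bounded away from zero. For that case we would invoke Lemma~\ref{lem:no_indef_stall}: under Assumption~\ref{ass:start_deadlock_free}, rest instants are isolated and the robot exits rest with $|x_4|\ge v_{\min}$. We would then argue that, on the time set where $|x_4|\ge v_{\min}$, the slope is $\gamma_1\lesssim q_\omega/(q_\omega+pv_{\min}^2)+\epsilon_a$-terms. Choosing $p$ large and $\epsilon_a$ small makes $2\|P_\xi B\|\gamma_1\le\tfrac12$. On that set $\dot V\le-\tfrac12\|\bm\xi\|^2+c_0\|\bm\xi\|$, so $V$ decreases outside a ball of radius $2c_0$.

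The short intervals near rest have bounded length and bounded $\bm\delta^\star$ growth. On these intervals $V$ can grow by at most a bounded factor by Gronwall, and this growth is absorbed into the ultimate bound. The bound $b$ then follows from the sublevel set of $V$ in the usual way, as in Khalil's UUB theorem. The time $T$ depends only on $a$, $b$ and the constants $v_{\max}$, $Y_{\max}$, $p$, $q_a$, $q_\omega$, $\epsilon_a$, and not on $t_0$. This step, making rest intervals uniformly short, is where we expect the most care to be needed. We may have to accept it in the form that Lemma~\ref{lem:no_indef_stall} provides.
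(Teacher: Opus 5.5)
\textbf{Verdict.} Your route is the paper's route. The step you flag at the end is a genuine gap: Lemma~\ref{lem:no_indef_stall} does not close it, and neither does the paper's own proof.

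\textbf{Agreement with the paper.} You treat $\bm\delta^\star=-P^{-1}M^{-1}(\bm\eta+DQ^{-1}\mathbf c)$ as a disturbance on the Hurwitz system $\dot{\bm\xi}=A_\xi\bm\xi+B_\xi\bm\delta^\star$ and use the Lyapunov-equation function $V$. You then use $DQ^{-1}D^\top=q_a^{-1}\mathbf n_\parallel\mathbf n_\parallel^\top+x_4^2q_\omega^{-1}\mathbf n_\perp\mathbf n_\perp^\top$ to make the slope small by taking $p$ large where $|x_4|\ge v_{\min}$. The paper packages your two eigenvalues as $\|R_\eta\|\le 1/(1+p\varrho)$ with $\varrho=\min\{1/q_a,v_{\min}^2/q_\omega\}$.

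\textbf{Minor point on $\epsilon_a$.} You need not shrink $\epsilon_a$. The second column of $D$ is $\mathbf n_\parallel$, so $DQ^{-1}\mathbf c=(c_2/q_a)\mathbf n_\parallel$. The bias therefore enters the slack exactly as $-\frac{c_2}{q_a+p}\mathbf n_\parallel$, which is $O(1/p)$ uniformly in $x_4$. Shrinking $\epsilon_a$ would also change $\mathcal Z$, and hence what Assumption~\ref{ass:start_deadlock_free} asserts. The only part of the slack that degrades near rest is $-\frac{q_\omega}{q_\omega+px_4^2}s_\perp\mathbf n_\perp$, which at $x_4=0$ cancels the lateral feedback completely.

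\textbf{The gap.} Lemma~\ref{lem:no_indef_stall} is pointwise. For each rest instant $\tau$ it gives \emph{some} $t_{\mathrm{move}}$ and \emph{some} $v_{\min}$. It gives no uniformity across rest events and no claim that $|x_4|\ge v_{\min}$ persists after $t_{\mathrm{move}}$. It also says nothing about stretches with $0<|x_4|<v_{\min}$ that never reach rest.

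On such stretches you in general only have $\dot V\le cV+c'$, so each one can multiply $V$ by up to $e^{cT_b}$. With infinitely many stops, as in the oscillating-line reference, these factors compound. Suppose $\dot V\le-\mu V+\nu$ on good intervals. Then you need every subsequent good interval to have length $T_g$ with $e^{cT_b-\mu T_g}\le\theta<1$, with constants independent of the event and of $t_0$. That dwell-time condition is exactly what ``absorbed into the ultimate bound'' silently assumes. Nothing in Assumptions~\ref{ass:bounded_v}--\ref{ass:start_deadlock_free} supplies it.

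\textbf{A further circularity.} $v_{\min}$ is a property of the closed-loop solution, so it depends on $p$. Yet your choice of $p$ (and the paper's) needs $p\,v_{\min}(p)^2$ to be large.

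\textbf{Comparison with the paper.} The paper proves the decrease of $V$ only ``on intervals where $|x_4|\ge v_{\min}$''. It then invokes Khalil's UUB theorem as if that bound held for all $t$, adding only that $t_{\mathrm{move}}$ does not depend on $t_0$. So your proposal matches the paper's level of rigor.

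\textbf{What a complete proof would need.} An extra hypothesis is required. For example: the set of times with $|x_4|<v_{\min}$ is a union of intervals of length at most $T_b$, separated by intervals of length at least $T_g$, with $cT_b<\mu T_g$ and $v_{\min}$ independent of $p$.
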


\begin{proof}
The proof shows that the optimal slack $\bm{\delta}^\star$ acts as a bounded disturbance on an exponentially stable error system. First, from~\eqref{eq:double_integrator},~\eqref{eq:fbl_qp_relaxed_eq}, and~\eqref{eq:virtual_input}, we have
$\ddot{\mathbf{y}} - \bm{\delta}^\star = \ddot{\mathbf{y}}_{\mathrm{ref}} - K_d \dot{\mathbf{e}} - K_p \mathbf{e}$. By rearranging terms, we get the error dynamics:
\[
\ddot{\mathbf{e}} + K_d\dot{\mathbf{e}} + K_p\mathbf{e} = \bm{\delta}^\star.
\]
With $\bm{\xi} = [\mathbf{e}^\top \ \dot{\mathbf{e}}^\top]^\top$, the error dynamics in state-space form are given by $\dot{\bm{\xi}} = A_\xi \bm{\xi} + B_\xi \bm{\delta}^\star$, where
\[
A_\xi = \begin{bmatrix} \bm{0} & I \\ -K_p & -K_d \end{bmatrix}, \quad B_\xi = \begin{bmatrix} \bm{0} \\ I \end{bmatrix}.
\]
Since $K_p\succ 0$ and $K_d\succ 0$, the matrix $A_\xi$ is Hurwitz, and thus the unforced error dynamics are exponentially stable. Next, we derive an upper bound on the slack term. From~\eqref{eq:w_star}, the slack variable can be written as
$\bm{\delta}^\star
= -R_\eta(\mathbf{x})\,\bm{\eta}(\mathbf{x},r)
  -R_c(\mathbf{x})\,\mathbf{c}(\mathbf{x},r),$ with $R_\eta(\mathbf{x})=P^{-1}M(\mathbf{x})^{-1}$, $
R_c(\mathbf{x})=P^{-1}M(\mathbf{x})^{-1}D(\mathbf{x})Q^{-1}
$. Here $M(\mathbf{x})$ is given by~\eqref{eq:def_M}.  
Now, on intervals where $|x_4|\ge v_{\min}>0$, as guaranteed by Lemma~\ref{lem:no_indef_stall}, the decoupling matrix in~\eqref{eq:decoupling_matrix} satisfies
\vspace{-0.05in}
\begin{equation}
\label{eq:DQDt_decomp}
D(\mathbf{x})Q^{-1}D(\mathbf{x})^\top
=
\frac{1}{q_a}\,\mathbf{n}_\parallel\mathbf{n}_\parallel^\top
+
\frac{x_4^2}{q_\omega}\,\mathbf{n}_\perp\mathbf{n}_\perp^\top.
\end{equation}
Since $\{\mathbf{n}_\parallel,\mathbf{n}_\perp\}$ is an orthonormal basis of $\mathbb{R}^2$,
\[
D(\mathbf x)Q^{-1}D(\mathbf x)^\top
\succeq
\varrho I_2,
\qquad
\varrho\eqdef
\min\left\{
\frac{1}{q_a},\frac{v_{\min}^2}{q_\omega}
\right\}.
\]
Since $P=pI_2$, we have $P^{-1}=\frac{1}{p}I_2$, from~\eqref{eq:def_M} we have
\[
M(\mathbf{x})
=
D(\mathbf{x})Q^{-1}D(\mathbf{x})^\top + \frac{1}{p}I_2
\succeq
\Big(\varrho+\frac{1}{p}\Big)I_2,
\]
which implies
$
\|M(\mathbf{x})^{-1}\|
\le
\frac{1}{\varrho+1/p}.
$
It follows that
\[
\|R_\eta(\mathbf{x})\|
=
\Big\|\frac{1}{p}M(\mathbf{x})^{-1}\Big\|
\le
\frac{1}{1+p\varrho}.
\]
Next, by sub-multiplicativity of the induced matrix norm,
\[
\|R_c(\mathbf{x})\|
\le
\|R_\eta(\mathbf{x})\|\,\|D(\mathbf{x})\|\,\|Q^{-1}\|
\le
\frac{\alpha}{1+p\varrho},
\]
where $\alpha \;\eqdef\; \sup_{\mathbf{x}\in\mathcal{X}} \|D(\mathbf{x})\|\,\|Q^{-1}\|$ is finite by Assumption~\ref{ass:bounded_v}.
Finally, since $|\rho_\parallel|\le 1$, $|\rho_\perp|\le 1$, and $|s_\parallel|,|s_\perp|\le \|\bm{\eta}\|$, the bias \eqref{eq:new_bias} satisfies
\[
\|\mathbf{c}(\mathbf{x},r)\|
=|c_2(\mathbf{x},r)|
\le
\epsilon_a\bigl(|s_\parallel|+|s_\perp|\bigr)
\le \epsilon_a\sqrt{2}\,\|\bm{\eta}(\mathbf{x},r)\|.
\]
By defining $k_\delta(p) \eqdef \frac{1+\alpha\epsilon_a\sqrt{2}}{1+p\varrho}$, we get
\begin{align*}
\|\bm{\delta}^\star\|
&\le
\|R_\eta\|\,\|\bm{\eta}\|
+
\|R_c\|\,\|\mathbf{c}\|
\;
\le
k_\delta(p) \
\|\bm{\eta}(\mathbf{x},r)\|,
\\
&\le k_\delta(p) \ \left( \|\ddot{\mathbf{y}}_{\mathrm{ref}}\| + \|K_p\mathbf{e} + K_d\dot{\mathbf{e}}\| \right).
\end{align*}
Applying the reference-acceleration bound in Assumption~\ref{ass:bounded_ref} and noting that
$\|K_p\mathbf{e}+K_d\dot{\mathbf{e}}\|
\le C_K\|\bm{\xi}\|,$
where $C_K$ denotes the induced norm of the constant matrix
$\begin{bmatrix}K_p & K_d\end{bmatrix}$, the slack term can be bounded as
\begin{equation}\label{eq:delta_bound}
    \|\bm{\delta}^\star\|\le k_\delta(p) \left( Y_{\max} + C_K \|\bm{\xi}\| \right).
\end{equation}
Now, consider the Lyapunov function candidate $V(\bm{\xi}) = \bm{\xi}^\top S \bm{\xi}$, where $A_\xi^\top S + S A_\xi = -I$ and $S \succ 0$ exists because $A_\xi$ is Hurwitz. The time derivative of $V$ is
\[
\dot{V} = -\|\bm{\xi}\|^2 + 2\bm{\xi}^\top S B_\xi \bm{\delta}^\star \le -\|\bm{\xi}\|^2 + 2\|S B_\xi\| \|\bm{\xi}\|\|\bm{\delta}^\star\|.
\]
Substituting the bound on $\|\bm{\delta}^\star\|$ from~\eqref{eq:delta_bound}, we get
$\dot{V} \le -\gamma\|\bm{\xi}\|^2+\beta\,\|\bm{\xi}\|,$ where $\gamma\eqdef 1-2\|SB_\xi\|\,C_K\,k_\delta(p)$ and $\beta\eqdef 2\|SB_\xi\|\,k_\delta(p)\,Y_{\max}$. On intervals where \eqref{eq:delta_bound} applies, we can choose $p$ such that $\gamma>0$. Moreover, when $\|\bm{\xi}\|\ge 2\beta/\gamma$, we have
$\dot{V} \le -\frac{\gamma}{2}\|\bm{\xi}\|^2.$
Therefore, $\dot V<0$ outside a compact set, which implies Uniform Ultimate Boundedness (UUB) of $\bm{\xi}(t)$~\cite[Thm.~4.18]{khalil2002nonlinear}. Hence, $\bm{\xi}(t)$ ultimately enters and remains in a compact neighborhood of the origin whose size is proportional to $\beta/\gamma$. This completes the proof.
\end{proof}

\begin{remark}
The size of the ultimate bound is reduced by increasing the slack penalty $p$. In particular, when the robot is away from zero velocity, the relaxed QP recovers the strict DFL constraint in the limit as $p\to\infty$. The parameter $\epsilon_a$ controls the strength of the acceleration bias used to escape rest configurations, while $l$ smooths the transition of the projection weights.
\end{remark} 

Note that the time $t_{\mathrm{move}}$ required to move from rest does not depend on $t_0 \ge 0$. Thus, the controller solves the tracking problem~\ref{prob:1.2} in the sense of Definition~\ref{def:UUB}. Moreover, Assumption~\ref{ass:start_deadlock_free} prevents the system from getting stuck in the deadlock set during stop-and-reverse maneuvers. Therefore, the relaxed DFL-QP controller solves Problem~\ref{prob:tracking} under Assumptions~\ref{ass:bounded_v}--\ref{ass:start_deadlock_free}.

\begin{remark}
While Assumption~\ref{ass:start_deadlock_free} may appear restrictive, in practice it only requires the system to be deadlock-free at rest \emph{initially}. This is mild for typical stop-and-reverse maneuvers with $C^2$ reference trajectories. Since $\ddot{\mathbf y}_{\mathrm{ref}}$ is continuous, the virtual input $\bm{\eta}(\mathbf{x},r)$ in~\eqref{eq:virtual_input} cannot jump instantaneously from a braking direction to a lateral direction at zero velocity. Thus, for smooth references, the deadlock condition is avoided for all time $t>0$.
\end{remark}

\section{Numerical Experiments}\label{sec:5}

We validate the proposed DFL-QP controller on the TurtleBot3 Waffle Pi in ROS2/Gazebo using its URDF model, which captures robot dynamics and actuator limits. The source code of our simulations is available online on \texttt{GitHub}\footnote{\url{https://github.com/gradslab/DFL_QP_Unicycle}}. All the experiments use a control update rate of $100$~Hz over a horizon of $T=20$~s. The applied signals are saturated according to the TurtleBot3 limits: $|x_4|\le0.26$~m/s, $|u_1|\le1.82$~rad/s, and $|u_2|\le1.0$~m/s$^2$. Although the analysis is developed for the unconstrained controller, actuator limits are imposed in simulation. A formal analysis of the constrained QP is left for future work.

\subsection{Half Figure-8}

The robot is initialized at $\mathbf{x}(0)=(-0.2,0.0,\pi,0.0)$ and compared against a traditional DFL controller with velocity reset~\cite{oriolo2002wmr}. The reference trajectory is
$\mathbf{p}_8(\tau) =
\begin{bmatrix}
\sin^2(\tau) &
\sin^2(\tau)\cos(\tau)
\end{bmatrix}^\top, \; 
\tau \in [0,2\pi],$ with cycle time $t_s$ parameterized as $
\tau(t) = \frac{2\pi}{t_s}(t \bmod t_s).$
This trajectory contains a cusp-like point at the origin, requiring the robot to stop and reverse direction. As shown in Figure~\ref{fig:trajectory_tracking_comp}, the traditional DFL controller~\cite{oriolo2002wmr} forces a U-turn at the cusp because it cannot pass through zero velocity, whereas the proposed DFL-QP controller tracks the reference through reversal. Figure~\ref{fig:trajectory_tracking_variables} further shows that the DFL-QP controller achieves lower tracking error by allowing $x_4$ to change sign.

\begin{figure}[!ht]
    \centering
    \includegraphics[width=\linewidth]{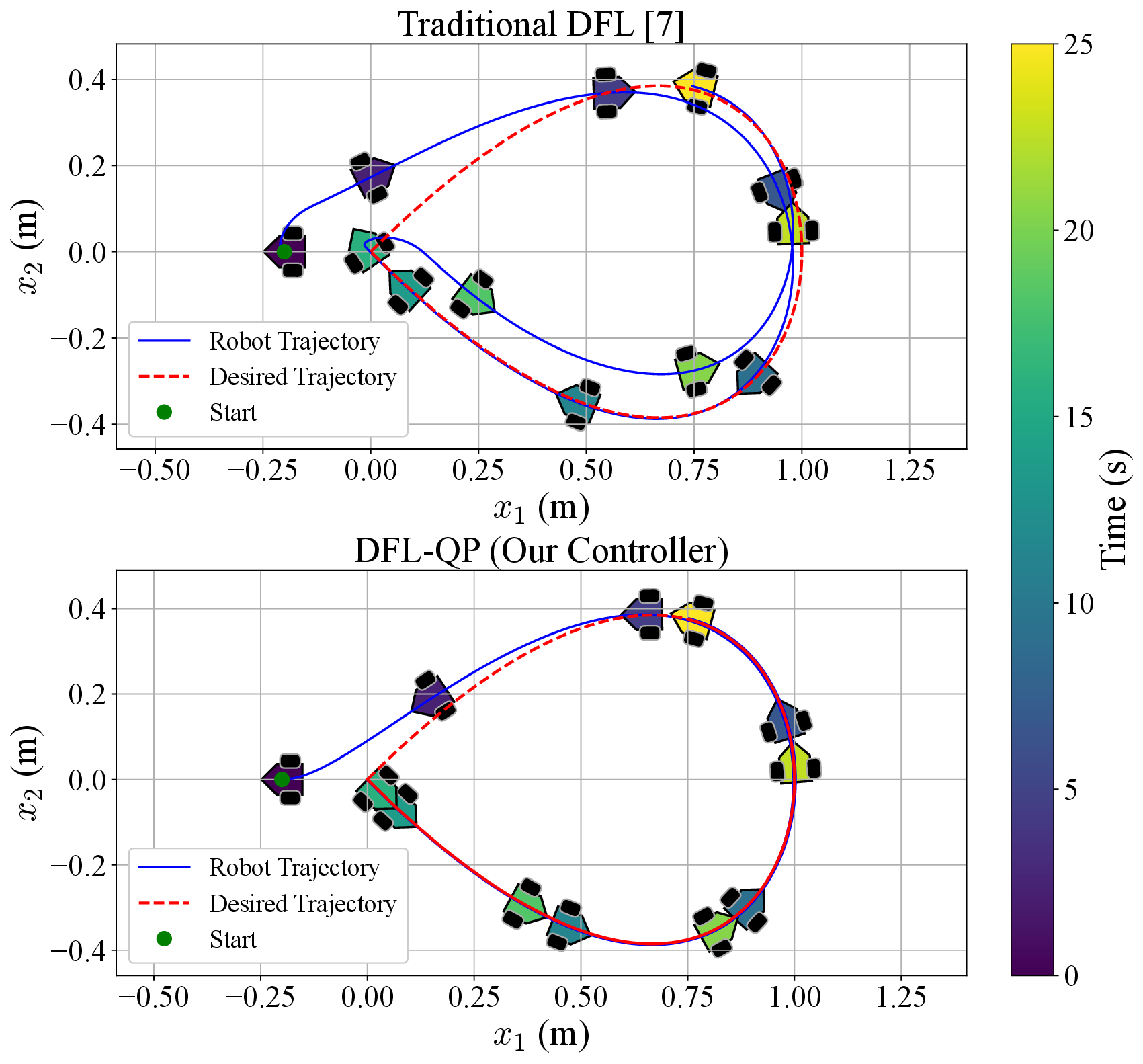}
    \vspace{-0.2in}
    \caption{Half figure-eight tracking comparison. The desired trajectory requires a stop-and-reverse maneuver at the cusp. The traditional DFL controller~\cite{oriolo2002wmr} loses accuracy near zero velocity, while the proposed DFL-QP tracks through the reversal. Poses are shown at $2$~s intervals.}
    \label{fig:trajectory_tracking_comp}
    \vspace{-0.1in}
\end{figure}

\begin{figure}[!ht]
    \centering
    \includegraphics[width=0.92\linewidth]{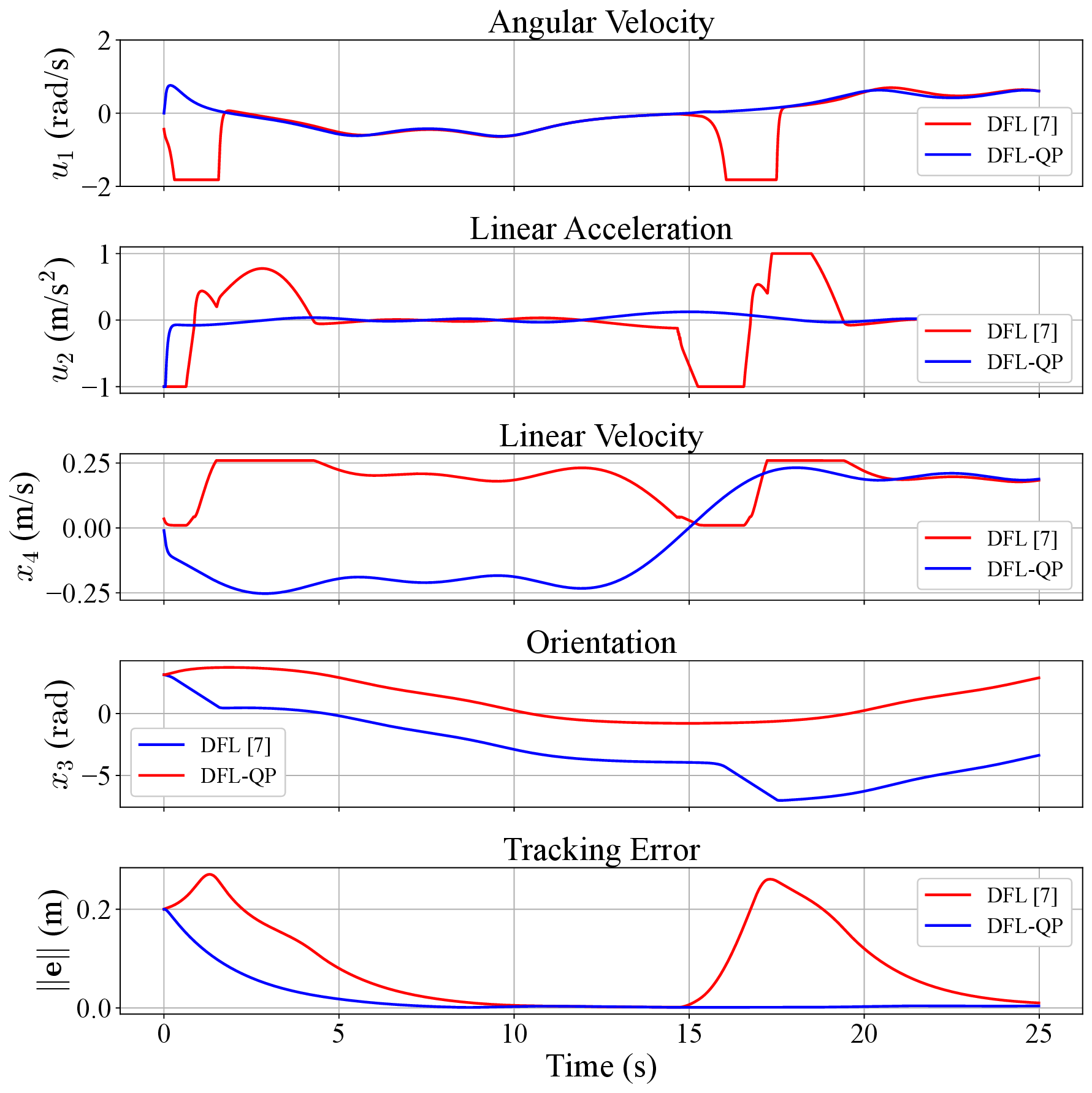}
    \vspace{-0.2in}
    \caption{State and control trajectories for the half figure-eight task. The proposed DFL-QP reverses linear velocity near $12.5$~s and achieves lower tracking error than the traditional DFL baseline.}
    \label{fig:trajectory_tracking_variables}
    \vspace{-0.1in}
\end{figure}

\subsection{Oscillating Line with Stop-and-Reverse}

We next evaluate repeated velocity reversals using an oscillating straight-line reference $\mathbf{y}_{\mathrm{ref}}: [0,T] \to \mathbb{R}^2$ defined as
$\mathbf{p}_{\ell}(\tau) =
\begin{bmatrix}
A \tau &
0
\end{bmatrix}^\top,
\;
\tau(t) = \sin\!\left(\frac{\pi t}{t_s}\right),
$
where $A > 0$ is the oscillation amplitude and $t_s$ denotes the half-period of oscillation. The corresponding derivatives are
$\dot{\tau}(t) = \frac{\pi}{t_s} \cos\!\left(\frac{\pi t}{t_s}\right), \;
\ddot{\tau}(t) = -\left(\frac{\pi}{t_s}\right)^2 \sin\!\left(\frac{\pi t}{t_s}\right).$
Thus, the reference is confined to the $x$-axis and requires periodic stops and reversals at $\tau=\pm1$. The robot is initialized at $\mathbf{x}(0)=(0.2,0.0,\pi,0.0)$, facing opposite the initial direction of motion. Figures~\ref{fig:trajectory_tracking_comp_line} and~\ref{fig:trajectory_tracking_variables_1} show that the traditional DFL controller loses effective angular control, leading to tracking failure. In contrast, the proposed DFL-QP controller crosses zero velocity smoothly and tracks the oscillating reference.

\begin{figure}[!ht]
    \centering
    \includegraphics[width=\linewidth]{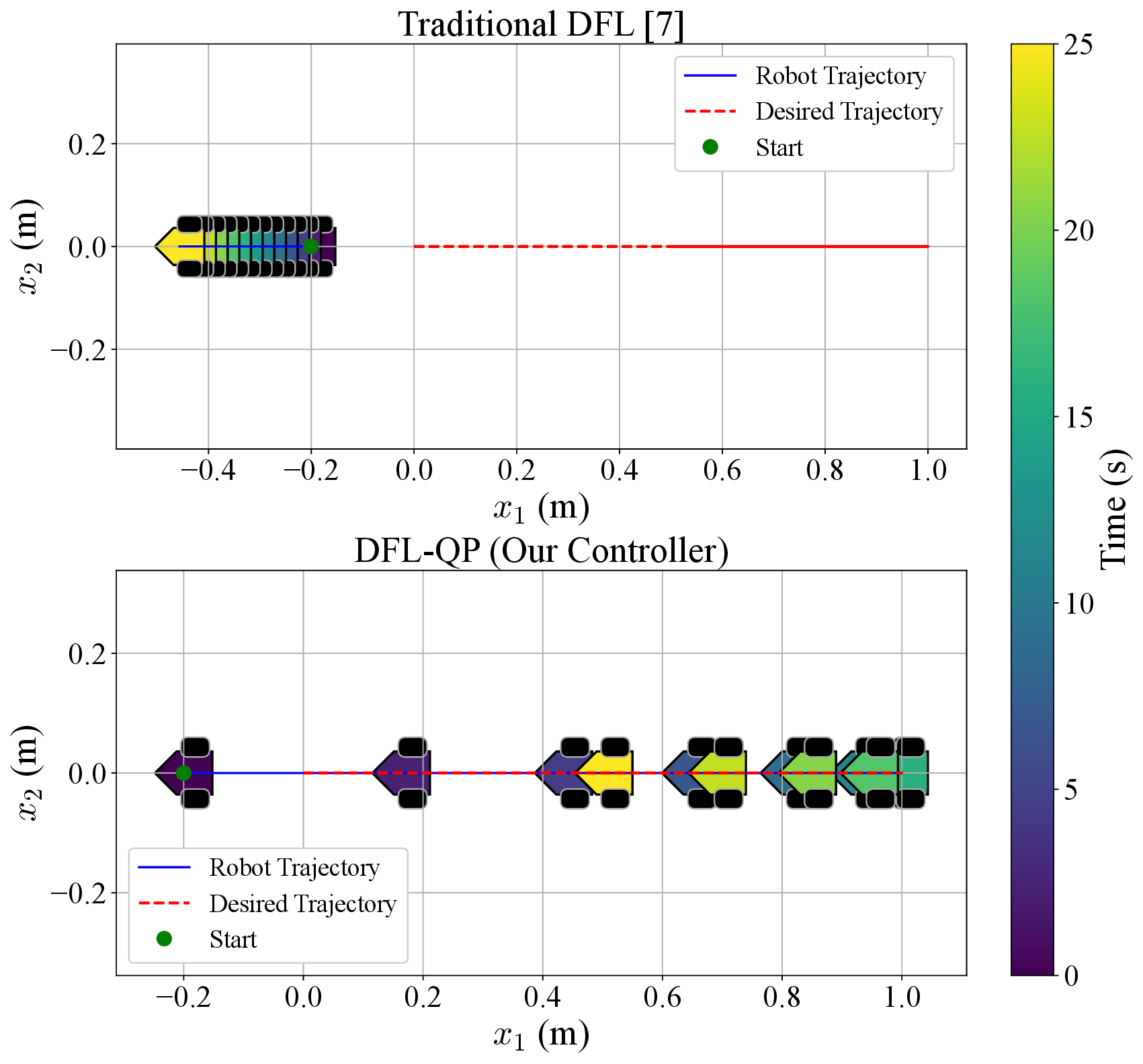}
    \vspace{-0.3in}
    \caption{Oscillating line tracking comparison. The desired motion requires repeated stop-and-reverse maneuvers. The traditional DFL controller~\cite{oriolo2002wmr} fails near zero velocity, while the proposed DFL-QP tracks through each reversal. Poses are shown at $2$~s intervals.}
    \label{fig:trajectory_tracking_comp_line}
    \vspace{-0.1in}
\end{figure}

\begin{figure}[!ht]
    \centering
    \includegraphics[width=0.92\linewidth]{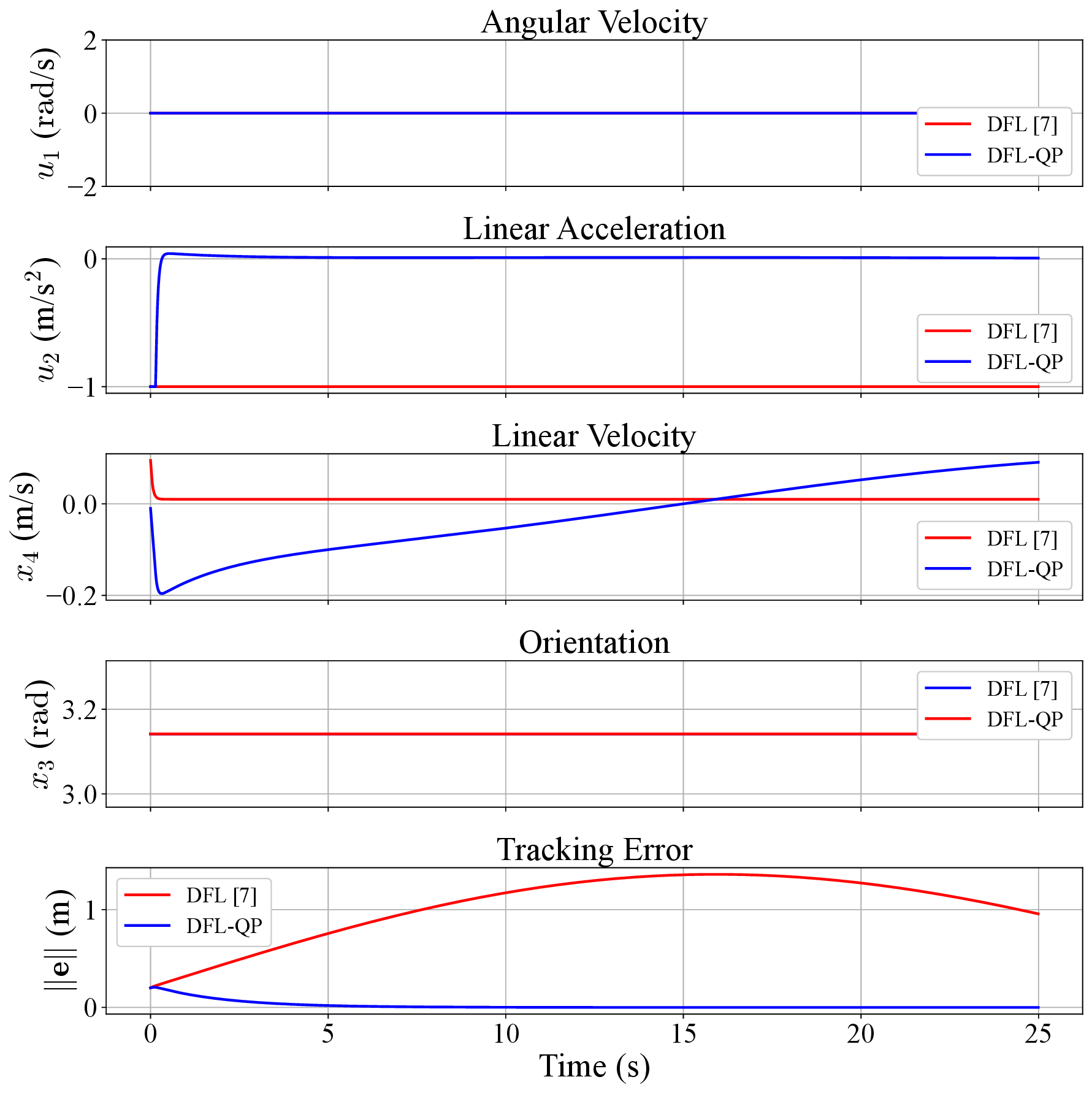}
    \vspace{-0.2in}
    \caption{State and control trajectories for the oscillating line task. The traditional DFL baseline~\cite{oriolo2002wmr} cannot smoothly cross zero velocity, while the proposed DFL-QP achieves smooth stop-and-reverse tracking.}
    \label{fig:trajectory_tracking_variables_1}
    \vspace{-0.15in}
\end{figure}


\section{CONCLUSION}\label{sec:6}


This paper introduces a Lipschitz-continuous, time-invariant controller for trajectory tracking of unicycle-modeled robots using a relaxed QP formulation to avoid singularities. We establish local Lipschitz continuity of the feedback controller and uniform ultimate boundedness of the tracking error. Numerical experiments in the ROS2-Gazebo physics engine on challenging trajectories that require a complete stop and direction reversal validate these results.

\bibliographystyle{IEEEtran}
\bibliography{IEEEabrv, myreferences}

\begin{thebibliography}{10}
\providecommand{\url}[1]{#1}
\csname url@rmstyle\endcsname
\providecommand{\newblock}{\relax}
\providecommand{\bibinfo}[2]{#2}
\providecommand\BIBentrySTDinterwordspacing{\spaceskip=0pt\relax}
\providecommand\BIBentryALTinterwordstretchfactor{4}
\providecommand\BIBentryALTinterwordspacing{\spaceskip=\fontdimen2\font plus
\BIBentryALTinterwordstretchfactor\fontdimen3\font minus \fontdimen4\font\relax}
\providecommand\BIBforeignlanguage[2]{{%
\expandafter\ifx\csname l@#1\endcsname\relax
\typeout{** WARNING: IEEEtran.bst: No hyphenation pattern has been}%
\typeout{** loaded for the language `#1'. Using the pattern for}%
\typeout{** the default language instead.}%
\else
\language=\csname l@#1\endcsname
\fi
#2}}

\bibitem{Padensurvey2016}
B.~Paden, M.~Čáp, S.~Z. Yong, D.~Yershov, and E.~Frazzoli, ``{A Survey of Motion Planning and Control Techniques for Self-Driving Urban Vehicles},'' \emph{IEEE Transactions on Intelligent Vehicles}, vol.~1, no.~1, pp. 33--55, 2016.

\bibitem{AkhWasNie2013Journal}
A.~Akhtar, C.~Nielsen, and S.~L. Waslander, ``Path following using dynamic transverse feedback linearization for car-like robots,'' \emph{IEEE Transactions on Robotics}, vol.~31, no.~2, pp. 269--279, April 2015.

\bibitem{hirschorn1987output}
R.~Hirschorn and J.~Davis, ``Output tracking for nonlinear systems with singular points,'' \emph{SIAM journal on control and optimization}, vol.~25, no.~3, pp. 547--557, 1987.

\bibitem{d1995control}
B.~d'Andr{\'e}a Novel, G.~Campion, and G.~Bastin, ``Control of nonholonomic wheeled mobile robots by state feedback linearization,'' \emph{The International journal of robotics research}, vol.~14, no.~6, pp. 543--559, 1995.

\bibitem{de1991exponential}
C.~C. De~Wit and O.~Sordalen, ``Exponential stabilization of mobile robots with nonholonomic constraints,'' in \emph{[1991] Proceedings of the 30th IEEE Conference on Decision and Control}.\hskip 1em plus 0.5em minus 0.4em\relax IEEE, 1991, pp. 692--697.

\bibitem{isidori1985nonlinear}
A.~Isidori, \emph{Nonlinear control systems: an introduction}.\hskip 1em plus 0.5em minus 0.4em\relax Springer, 1985.

\bibitem{oriolo2002wmr}
G.~Oriolo, A.~De~Luca, and M.~Vendittelli, ``{WMR control via dynamic feedback linearization: design, implementation, and experimental validation},'' \emph{IEEE Transactions on Control Systems Technology}, vol.~10, no.~6, pp. 835--852, 2002.

\bibitem{WanAkhSan2025}
\BIBentryALTinterwordspacing
N.~Wang, A.~Akhtar, and R.~G. Sanfelice, ``A safe hybrid control framework for car-like robot with guaranteed global path-invariance using a control barrier function,'' 2025. [Online]. Available: \url{https://arxiv.org/abs/2502.07136}
\BIBentrySTDinterwordspacing

\bibitem{samson2002control}
C.~Samson, ``Control of chained systems application to path following and time-varying point-stabilization of mobile robots,'' \emph{IEEE transactions on Automatic Control}, vol.~40, no.~1, pp. 64--77, 2002.

\bibitem{klanvcar2007tracking}
G.~Klan{\v{c}}ar and I.~{\v{S}}krjanc, ``Tracking-error model-based predictive control for mobile robots in real time,'' \emph{Robotics and autonomous systems}, vol.~55, no.~6, pp. 460--469, 2007.

\bibitem{astolfi1996discontinuous}
A.~Astolfi, ``Discontinuous control of nonholonomic systems,'' \emph{Systems \& control letters}, vol.~27, no.~1, pp. 37--45, 1996.

\bibitem{tomlin1998switching}
C.~Tomlin and S.~S. Sastry, ``Switching through singularities,'' \emph{Systems \& control letters}, vol.~35, no.~3, pp. 145--154, 1998.

\bibitem{ames2019control}
A.~D. Ames, S.~Coogan, M.~Egerstedt, G.~Notomista, K.~Sreenath, and P.~Tabuada, ``Control barrier functions: Theory and applications,'' in \emph{European control conference (ECC)}.\hskip 1em plus 0.5em minus 0.4em\relax IEEE, 2019, pp. 3420--3431.

\bibitem{bonnans2013perturbation}
J.~F. Bonnans and A.~Shapiro, \emph{Perturbation analysis of optimization problems}.\hskip 1em plus 0.5em minus 0.4em\relax Springer Science \& Business Media, 2013.

\bibitem{brockett1983asymptotic}
R.~W. Brockett \emph{et~al.}, ``Asymptotic stability and feedback stabilization,'' \emph{Differential geometric control theory}, vol.~27, no.~1, pp. 181--191, 1983.

\bibitem{wang2015simultaneous}
Y.~Wang, Z.~Miao, H.~Zhong, and Q.~Pan, ``{Simultaneous Stabilization and Tracking of Nonholonomic Mobile Robots: A Lyapunov-Based Approach},'' \emph{IEEE Transactions on Control Systems Technology}, vol.~23, no.~4, pp. 1440--1450, 2015.

\bibitem{khalil2002nonlinear}
H.~K. Khalil, \emph{Nonlinear systems}.\hskip 1em plus 0.5em minus 0.4em\relax Prentice hall Upper Saddle River, NJ, 2002, vol.~3.

\bibitem{boyd2004convex}
S.~P. Boyd and L.~Vandenberghe, \emph{Convex optimization}.\hskip 1em plus 0.5em minus 0.4em\relax Cambridge university press, 2004.

\end{thebibliography}

\end{document}